\documentclass[conference]{IEEEtran}

\usepackage{amssymb}
\usepackage{amsmath}
\usepackage{amsthm}
\usepackage{graphicx}
\usepackage{subfigure}

\setcounter{topnumber}{4}

\newcommand\be{\[}
\newcommand\ee{\]}
\newcommand\sigmainv{\sigma^{-1}}
\newcommand\sigmastar{\sigma^*}
\newcommand\sigmastarinv{\sigma^{*-1}}
\newcommand\aaa{a}
\newcommand\astar{\aaa^{*}}

\newtheorem{theorem}{Theorem}

%\jmlrheading{?}{??}{???-???}{????}{?????}{Eric Bax}
%\ShortHeadings{Near Neighbor Scoring Function}{Bax}
%\firstpageno{1}

%\nipsfinalcopy % Uncomment for camera-ready version

\begin{document}

\title{Improved Error Bounds Based on Worst Likely Assignments}

\author{\IEEEauthorblockN{Eric Bax}
\IEEEauthorblockA{%Yahoo Labs\\
Email: baxhome@yahoo.com}}

%\editor{??}

\maketitle

\begin{abstract}
Error bounds based on worst likely assignments use permutation tests to validate classifiers. Worst likely assignments can produce effective bounds even for data sets with 100 or fewer training examples. This paper introduces a statistic for use in the permutation tests of worst likely assignments that improves error bounds, especially for accurate classifiers, which are typically the classifiers of interest.
\end{abstract}

%\begin{keywords}
%error bound, validation, worst likely assignment, permutation test, transduction
%\end{keywords}

\section{Introduction}
Permutation tests are used in statistics for hypothesis testing, especially in bioinformatics \cite{golland05,corcoran05,knijnenburg09,efron07,subramanian05}.  Permutation tests apply to a wide range of problems because they do not rely on assumptions about the form of the distributions that generate samples. Because they do not rely on asymptotic results, they are ideal for small-sample problems. As a result, permutation tests are often used in exact tests \cite{weerahandi95,fisher54}.   

Error bounds based on worst likely assignments \cite{baxcallejas,li12} incorporate permutation tests as part of a technique to produce error bounds for classifiers. Worst likely assignments can produce effective error bounds even for small data sets. In this paper, we show effective error bounds even for data sets with 100 or fewer training examples. 

Worst likely assignment error bounds apply to the transductive setting \cite{vapnik98}, where there is a set of training examples with known inputs and labels, and a set of working examples with known inputs and unknown labels. The goals are training---developing a classifier that performs well on the working examples---and validation---producing a bound on the classifier's error rate over the working examples. 

The worst likely assignment technique tests each potential assignment of class labels to the working examples, evaluating whether the assigned labels cause the working examples to ``blend in" among the training examples. If so, then the assignment is declared a likely assignment, making the classifier's error rate given the assignment a candidate for the error bound. If not, then the assignment is dismissed as being unlikely.

This paper presents a new type of statistic for use in the permutation test to determine whether an assignment is likely. The statistic is a \textit{scoring function} that evaluates whether assigned labels cause working examples to have rates of disagreement with neighboring examples' labels that are similar to those rates for training examples. The scoring function significantly improves error bounds for accurate classifiers. 

This paper is organized as follows. Section 2 defines terms and notation. Section 3 reviews error bounds based on worst likely assignments. Section 4 presents the new scoring function. Section 5 demonstrates how well the scoring function performs on some data sets. Section 6 closes with a discussion of challenges for future work.

\section{Concepts and Notation} 

This paper concerns validation of classifiers learned from examples. Each example \textit{Z} = (\textit{X}, \textit{Y}) includes an input \textit{X} and a class label \textit{Y} $\in$ \{0,1\}. Define \textit{complete sequence} $C$ to be a random variable:

\[C=Z_{1} , \ldots,Z_{t+w}\]

\noindent
with examples \(\textit{Z}_{1}=(\textit{X}_{1},\textit{Y}_{1})\), \dots , \(\textit{Z}_{t+w}=(\textit{X}_{t+w},\textit{Y}_{t+w})\) drawn i.i.d. from an unknown joint distribution $D$ of inputs and labels. We observe 

\be
(X_{1} ,Y_{1} ),...,(X_{t} ,Y_{t} ),X_{t+1} ,...,X_{t+w}
\ee

\noindent
that is, inputs and outputs of \textit{t} training examples and just the inputs of \textit{w} working examples.  A classifier \textit{g}, which is a mapping from the input space of \textit{X} to \{0,1\}, is developed using the observed data. Then classifier \textit{g} is used to predict the working example outputs \(\textit{Y}_{t+1}\), \dots , \(\textit{Y}_{t+w}\) associated with inputs \(\textit{X}_{t+1}\), \dots , \(\textit{X}_{t+w}\). 

For any sequence of examples

\be
 c=(x_{1} ,y_{1} ), \ldots, (x_{t+w} ,y_{t+w} ) 
\ee

\noindent
from the joint space of inputs and labels, define the \textit{error} to be

\be
E_{c} =\frac{1}{w} \sum _{i=t+1}^{t+w}I(g(x_{i} )\ne y_{i} ),
\ee

\noindent
where \textit{I} is the indicator function---one if the argument is true and zero otherwise. The goal is to produce a PAC (probably approximately correct) bound on \(\textit{E}_{C}\), the error on complete sequence \textit{C}.

\section{Worst Likely Assignment Error Bound}

Use $\sigma c$ to denote the sequence $c$ permuted by permutation $\sigma$ of $1, \ldots, t+w$. Use $\sigmainv$ to denote the inverse of permutation $\sigma$, such that $\sigma \sigmainv c = c$ for all $c$. Let $h()$ be a real-valued \textit{scoring function} on sequences of $t+w$ examples. For example, $h(c)$ could be the error rate over the last $w$ inputs for a classifier trained on the first $t$ examples in $c$. Let $Q$ be a set or multi-set of permutations of $\{1, \dots, t+w\}$. Define the \textit{ranking function} $r(x, S)$ to be the rank of value $x$ among the entries in set or multiset $S$, with random tie-breaking to determine the rankings among equal values. 

\begin{theorem}\label{thm1}

If $C$ is drawn according to $D^{t+w}$ and $\sigmastar$ is drawn uniformly at random from $Q$, then

\be
\forall k \in \{1, \ldots, |Q|\}:
\ee
\be
Pr[r(h(C), \{h(\sigma \sigmastarinv C) | \sigma \in Q\}) = k] = \frac{1}{|Q|}
\ee

\noindent
where the probability is over random draws of $C$ and $\sigmastar$. In other words, if $C$ is mapped to a random permutation in $Q$, then $C$ is equally likely to have each rank among permutations of $C$ relative to the mapped permutation.

\noindent
\end{theorem}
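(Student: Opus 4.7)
\medskip

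\noindent\textbf{Proof Plan.} The plan is to reparametrize the randomness so that the rank becomes the rank of a uniformly chosen element of a multiset, at which point uniformity of the rank follows from the random tie-breaking rule.

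First, I would introduce the auxiliary random sequence $C' = \sigmastarinv C$ and show two facts: (i) $C'$ has the same distribution as $C$, and (ii) $C'$ is independent of $\sigmastar$. Fact (i) follows because $C$ is i.i.d.\ from $D$, so for any fixed permutation $\sigma$ of $\{1,\ldots,t+w\}$ the permuted sequence $\sigma^{-1} C$ is also i.i.d.\ from $D$, hence has the same law as $C$. Fact (ii) follows by conditioning: for every $\sigma \in Q$ and every sequence $c'$, $\Pr[C' = c' \mid \sigmastar = \sigma] = \Pr[\sigma^{-1} C = c'] = \Pr[C = c']$, which does not depend on $\sigma$, so $C'$ and $\sigmastar$ are independent.

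Next, I would rewrite both the tested value and the comparison multiset in terms of $C'$. Since $\sigmastar \sigmastarinv C = C$, we have $h(C) = h(\sigmastar C')$, and trivially $\{h(\sigma \sigmastarinv C) \mid \sigma \in Q\} = \{h(\sigma C') \mid \sigma \in Q\}$. So the quantity of interest is
\[
r\bigl(h(\sigmastar C'),\ \{h(\sigma C') \mid \sigma \in Q\}\bigr).
\]
Now condition on $C' = c'$. By independence, $\sigmastar$ is still uniform on $Q$, so $h(\sigmastar c')$ is a uniformly chosen element of the multiset $S_{c'} = \{h(\sigma c') \mid \sigma \in Q\}$ (each $\sigma \in Q$ contributes exactly one entry, corresponding to one draw).

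Finally, I would invoke the random tie-breaking rule to conclude that a uniformly chosen entry of a size-$|Q|$ multiset has uniform rank in $\{1,\ldots,|Q|\}$: for distinct values this is immediate, and for tied values the independent random tie-breaks among equal entries make the rank of a uniformly chosen entry uniform as well. Hence $\Pr[r(\cdot) = k \mid C' = c'] = 1/|Q|$ for every $c'$ and every $k$, and integrating over $c'$ yields the theorem. The only delicate point, and the one I would be most careful with, is the tie-breaking step; the independence of $C'$ and $\sigmastar$ is what makes it clean, since otherwise the element selected from $S_{c'}$ could be correlated with the multiset's structure.
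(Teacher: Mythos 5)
Your proposal is correct and follows essentially the same route as the paper: substitute $C' = \sigmastarinv C$, reduce to the rank of $h(\sigmastar c')$ in $\{h(\sigma c') \mid \sigma \in Q\}$ for fixed $c'$, and conclude uniformity of the rank from uniformity of $\sigmastar$ plus random tie-breaking. Your explicit verification that $C'$ is \emph{independent} of $\sigmastar$ (not merely equidistributed with $C$) is a point the paper's proof leaves implicit, and it is exactly what justifies the paper's replacement of $\sigmastarinv C$ by $C$ followed by conditioning on the sequence.
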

\begin{proof}
Since the elements of $C$ are i.i.d., for each sequence $c$, each permutation of $c$ is equally likely to be $C$. So the distribution of $\sigmastarinv C$ is the same as the distribution of $C$, and we can replace $\sigmastarinv C$ by $C$ in the theorem to get the logically equivalent statement:

\be
\forall k \in \{1, \ldots, |Q|\}: 
\ee
\be
Pr[r(h(\sigmastar C), \{h(\sigma C) | \sigma \in Q\}) = k] = \frac{1}{|Q|},
\ee

\noindent
where the probability is over random draws of $C$ and $\sigmastar$. To prove this equation, it is sufficient to show that 

\be
\forall c: \forall k \in \{1, \ldots, |Q|\}: 
\ee
\be
Pr[r(h(\sigmastar c), \{h(\sigma c) | \sigma \in Q\}) = k] = \frac{1}{|Q|},
\ee

\noindent
where the probability is over random draws of $\sigmastar$. Since $\sigmastar$ is selected uniformly at random from $Q$, $h(\sigmastar c)$ is equally likely to be each entry in 

\be
\{h(\sigma c) | \sigma \in Q\}.
\ee

\noindent
So $h(\sigmastar c)$ is equally likely to have each rank in $1, \ldots, |Q|$.
\noindent
\end{proof}

Let $\aaa = (\hat{y}_{t+1}, \ldots, \hat{y}_{t+w})$ denote an assignment to the (unknown) outputs of the working examples in sequence $C$, and let $C(\aaa)$ denote the sequence with the values in $\aaa$ assigned to the working example outputs.  Let $\astar = (y_{t+1}, \ldots, y_{t+w})$ be the actual outputs, so that $C(\astar) = C$. For a given \textit{bound failure probability} $\delta$, define a \textit{likely set} of assignments:

\be
L = \{\aaa \in \{0,1\}^w |
\ee
\be
r(h(C(\aaa)), \{h(\sigma \sigmastarinv C(\aaa)) | \sigma \in Q\}) \leq \lceil (1 - \delta) |Q| \rceil \}.
\ee

\noindent
In other words, the likely set contains the assignments $\aaa$ that rank in the bottom $1 - \delta$ in the test: assign $\aaa$ to the unknown outputs of the working set of $C$, permute $C(\aaa)$ by $\sigmastarinv$, take all permutations $\sigma$ in $Q$ of $\sigmastarinv C(\aaa)$, compute their scores  $h(\sigma \sigmastarinv C(\aaa))$, then check the rank of the score of $h(C(\aaa))$ among the scores. 

\begin{theorem}\label{thm2}
If $C$ is drawn according to $D^{t+w}$ and $\sigmastar$ is drawn uniformly at random from $Q$, then

\be
Pr[E_C \leq \max_{\aaa \in L} E_{C(\aaa)}] \geq 1 - \delta,
\ee

\noindent
where the probability is over random draws of $C$ and $\sigmastar$. 

\noindent
\end{theorem}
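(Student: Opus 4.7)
The plan is to show that the \emph{true} assignment $\astar$ lies in the likely set $L$ with probability at least $1-\delta$, and then observe that whenever $\astar \in L$ the desired inequality holds trivially because $E_C = E_{C(\astar)}$ appears in the maximum.

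First I would apply Theorem~\ref{thm1} directly to the true sequence $C = C(\astar)$. The theorem says that the rank of $h(C)$ among $\{h(\sigma\sigmastarinv C) \mid \sigma \in Q\}$ is uniformly distributed on $\{1, \ldots, |Q|\}$ over the joint randomness of $C$ and $\sigmastar$. Therefore, summing over the allowed ranks,
\be
Pr\!\left[r\bigl(h(C),\{h(\sigma\sigmastarinv C) \mid \sigma \in Q\}\bigr) \leq \lceil (1-\delta)|Q|\rceil\right] = \frac{\lceil (1-\delta)|Q|\rceil}{|Q|} \geq 1 - \delta.
\ee

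Next I would invoke the definition of $L$ with $\aaa = \astar$: since $C(\astar) = C$, the event above is exactly the event $\astar \in L$. Hence $Pr[\astar \in L] \geq 1 - \delta$. On this event, $\astar$ is one of the assignments over which the maximum in the statement is taken, so
\be
E_C = E_{C(\astar)} \leq \max_{\aaa \in L} E_{C(\aaa)},
\ee
which yields the claimed bound.

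The only subtle step is the bookkeeping in the first paragraph: I need to verify that $\lceil (1-\delta)|Q|\rceil / |Q| \geq 1 - \delta$, which follows immediately from $\lceil x\rceil \geq x$. Nothing else is at stake — once the rank of the true scoring value is uniform (Theorem~\ref{thm1}), membership of $\astar$ in $L$ is a purely deterministic rephrasing, and the error-rate inequality is definitional. So the main conceptual move is simply recognizing that the randomization in Theorem~\ref{thm1} can be applied with the real data substituted in the role of the arbitrary $C$, which is legitimate precisely because that theorem holds for the randomly drawn $C$ itself, not merely for some fixed placeholder.
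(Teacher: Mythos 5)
Your proposal is correct and follows essentially the same route as the paper's own proof: both reduce the claim to showing $\astar \in L$ with probability at least $1-\delta$, identify that event with the rank condition via $C(\astar)=C$, and invoke Theorem~\ref{thm1} to bound the probability by $\lceil(1-\delta)|Q|\rceil/|Q| \geq 1-\delta$. The only difference is presentational (you argue forward from the probability bound, the paper states the deterministic implication first), and your explicit check that $\lceil x\rceil \geq x$ is a small welcome addition.
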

\begin{proof}
Since $C=C(\astar)$, 

\be
E_C = E_{C(\astar)}.
\ee

\noindent
So 

\be 
\astar \in L \implies E_C \leq \max_{\aaa \in L} E_{C(\aaa)}.
\ee

\noindent
Examine the definition of $L$. When $\aaa = \astar$, $C(\aaa) = C$. So 

\be
r(h(C), \{h(\sigma \sigmastarinv C | \sigma \in Q\}) \leq \lceil (1 - \delta) |Q| \rceil \implies \astar \in L.
\ee

\noindent
According to Theorem \ref{thm1}, 

\be
r(h(C), \{h(\sigma \sigmastarinv C | \sigma \in Q\})
\ee

\noindent
is equally likely to have each value in $1, \ldots, |Q|$. So the probability that the value is in $1, \ldots,  \lceil (1 - \delta) |Q| \rceil $ is at least $1 - \delta$.
\noindent
\end{proof}

Error bounds based on Theorem \ref{thm2} are called worst likely assignment error bounds because the bound is the highest error that is consistent with a likely assignment. Theorem \ref{thm2} is general. A specific error bound requires a scoring function $h()$ and a permutation set $Q$.  Effectiveness of the error bound and ease of computation influence the selection and design of scoring functions and permutation sets. Bax and Callejas \cite{baxcallejas} outline several scoring functions, including the error from developing a classifier based on the first $t$ examples and applying it to the last $w$ examples from the argument sequence of $t+w$ examples. They also also introduce two types of permutation sets: the set of all permutations and random subsets of permutations.  This paper introduces a new scoring function that improves error bounds.

\section{A Near Neighbor Scoring Function}
Consider a scoring function that counts differences between labels of examples in the last $w$ examples and their nearest neighbors in the first $t$ examples of the sequence $c$ being scored. Suppose that nearby neighbors among the training examples tend to have the same labels. Then the scores will typically be low when scoring permutations of $c$, because many of the examples in the last $w$ of the permuted sequence will be training examples, and in many cases their nearest neighbors in the first $t$ examples of the permuted sequence will be nearby neighbors among the training examples. These low scores constrain the set of likely assignments to those with high rates of agreement between labels assigned to the working examples and the labels of their nearest neighbors among the training examples. As a result, the scoring function typically produces strong error bounds. 

This section defines a class of scoring functions that count disagreements between examples in the last $w$ examples and their nearby neighbors in the first $t$ examples. The disagreements can be summed over multiple near neighbors to make the scores more robust. Also, the contributions to the sum from different neighbors can be weighted to emphasize nearer neighbors more. 

Let 

\be
n_{ijS}(c)
\ee

\noindent
be the label of the $i$th nearest neighbor to example $j$ in sequence $c = \{(x_1,y_1), \ldots, (x_{t+w},y_{t+w})\}$ among the examples indexed by set $S$, with random tie-breaking. Define a class of scoring functions

\begin{equation} \label{faks}
f_{\alpha k S}(c) =  \sum_{i=1}^{k} \sum_{j \in \{t+1, \ldots, t+w\}} \alpha^{i-1} I(n_{ijS} \neq y_j)
\end{equation}

\noindent
where the indicator function $I()$ is one if the argument is true and zero otherwise.

These scoring functions count disagreements between the last $w$ examples in $c$ and their nearest neighbors among a subset of examples in $c$. The parameter $\alpha$ specifies how much to weigh disagreements with nearby neighbors relative disagreements with more distant neighbors. The parameter $k$ expresses how many nearby neighbors to consider. Set $S$ constrains the examples for which disagreements can be counted. 

Setting $S = \{1, \ldots, t\}$ specifies that when the scoring function is applied to the sequence consisting of training examples followed by working examples with assigned labels, the score ignores disagreements between pairs of working examples, which both have assigned labels. So the scoring function does not favor assignments with incorrect, but agreeing, labels assigned to neighboring working examples. Instead, the score is based solely on disagreements between pairs of examples that have one working example with an assigned label and one training example with an actual label drawn by sampling. So it favors assignments that label working examples similarly to their neighboring training examples.

\section{Tests}
This section presents results from applying the near neighbor scoring function to produce error bounds for several data sets. For each data set, the results include a figure showing how adjusting the influence of more distant neighbors affects error bounds and a table showing more detailed statistics for the error bounds for a few parameter settings from the figure. For each data set, there are comparisons over a range of bound certainty values $\delta$. 

In these tests, NNSF refers to the near neighbor scoring function $f_{\alpha k S}$ defined in Equation \ref{faks}. The parameter settings are $\alpha \in \{0.0, \ldots, 0.9\}$ for the figures, $\alpha=0.5$ for the tables, and $k=t$ and $S=\{1, \ldots, t\}$ for both figures and tables. So in these tests, NNSF weighs each disagreement with a neighbor by $\alpha$ as much as disagreement with the next nearest neighbor, and, for each example in the last $w$ examples, disagreements with examples in the first $t$ examples contribute to the score. In the figures, as $\alpha$ increases from zero to one, more neighbors play significant roles in determining whether to accept or reject an assignment.

ESF (for error scoring function) refers to a baseline scoring function that uses the error from using the first $t$ examples as a 1-nearest neighbor classifier on the last $w$ examples. This scoring function, which was introduced by Bax and Callejas \cite{baxcallejas}, is equivalent to $f_{\alpha k S}$ with $k=1$ and $S=\{1, \ldots, t\}$. (We treat $0.0^0$ as one in Equation \ref{faks}, so ESF is also equivalent to NNSF with $\alpha = 0.0$.)

For each figure, each line shows results for a different value of the bound certainty parameter $\delta$. The value of $\alpha$ varies along each line. The plotted amounts are differences between error bound and actual error rate. Each plotted value is an average over 1000 trials. 

For each table, each row holds results for a different value of the bound certainty parameter $\delta$. The second column of each table shows errors from using training data as a 1-nearest neighbor classifier on working data. The subsequent columns show differences between the bounds on error and actual error for bounds using scoring functions NNSF and ESF.

Each cell shows a mean and standard deviation over 1000 trials. The cells in the ``Error'' column show mean and standard deviation of errors. The cells in subsequent columns show mean and standard deviation of difference between bound and error. For example, suppose the error has mean 0.3 and standard deviation 0.4, and a bounding method has mean 0.1 and standard deviation 0.0. This indicates that the error averages 0.3 over the 1000 trials, and the error varies quite a bit, but the bound is always exactly 0.1 greater than the actual error. 

Note that the standard deviations displayed in cells are standard deviations of the values over 1000 trials. They are not standard deviations of the estimates of the means of values over 1000 trials, that is, their large sizes do not indicate uncertainty about the accuracy of the means. Since there are 1000 trials, those standard deviations are about 1/33 of the ones shown, indicating that most differences in means of bounds produced by the different scoring functions are statistically significant for most of the tests.  

Each figure line and table row is based on the same 1000 trials, but different lines and rows are based on different sets of trials. For each trial, a size \textit{t+w} subset of examples is selected at random from a data set. A size-\textit{t} subset is selected at random to form the training set, and the remaining \textit{w} examples form the working set. The error is computed, and error bounds are computed using the scoring functions NNSF and ESF. The error is subtracted from each bound, and the differences are accumulated into the statistics shown in the figures and tables.

\begin{figure}[ht!]
     \begin{center}

        \subfigure[Iris Data]{
            \label{vary_iris_fig}
            \includegraphics[width=0.3\textwidth]{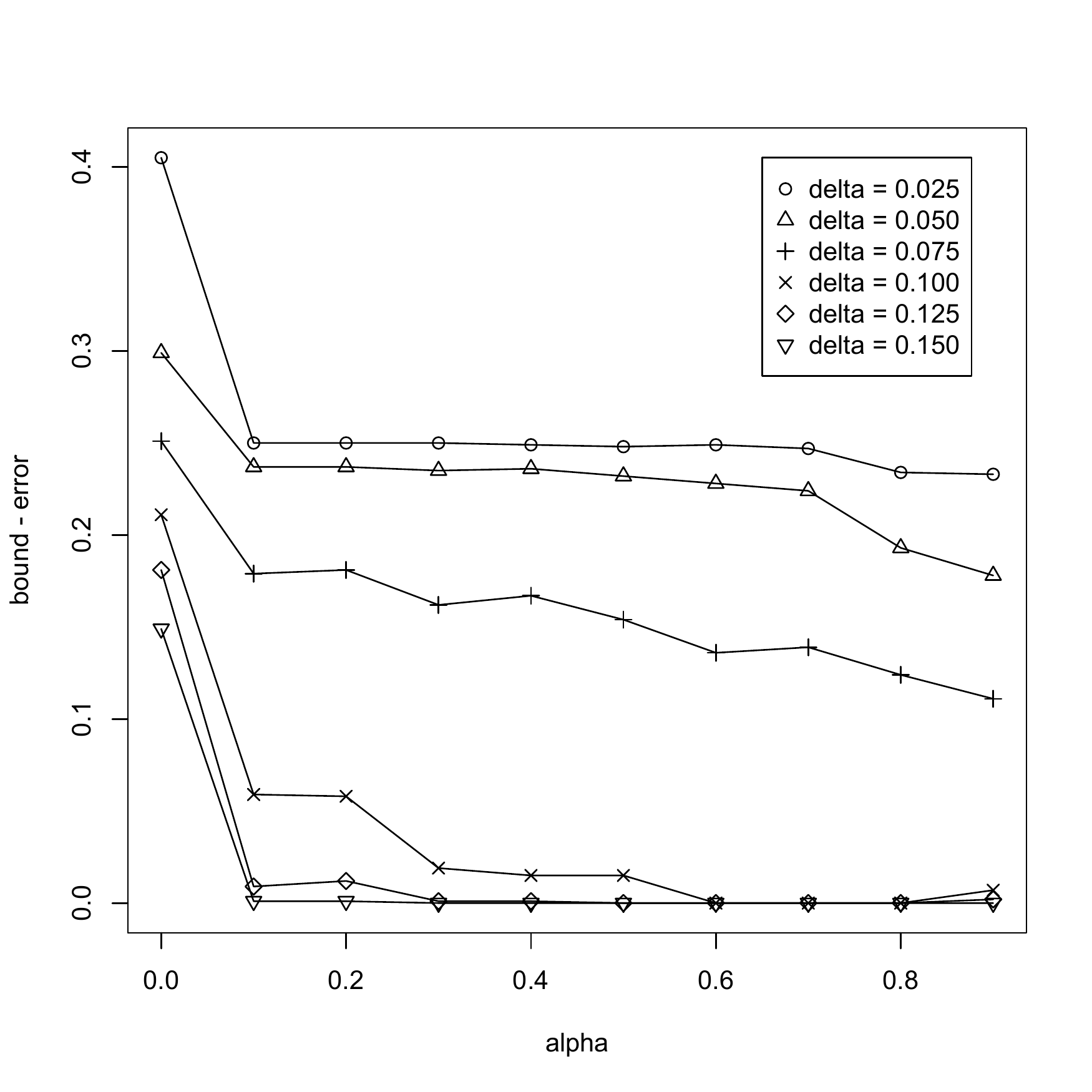}
        }
        \subfigure[Linear Data]{
           \label{vary_linear_fig}
           \includegraphics[width=0.3\textwidth]{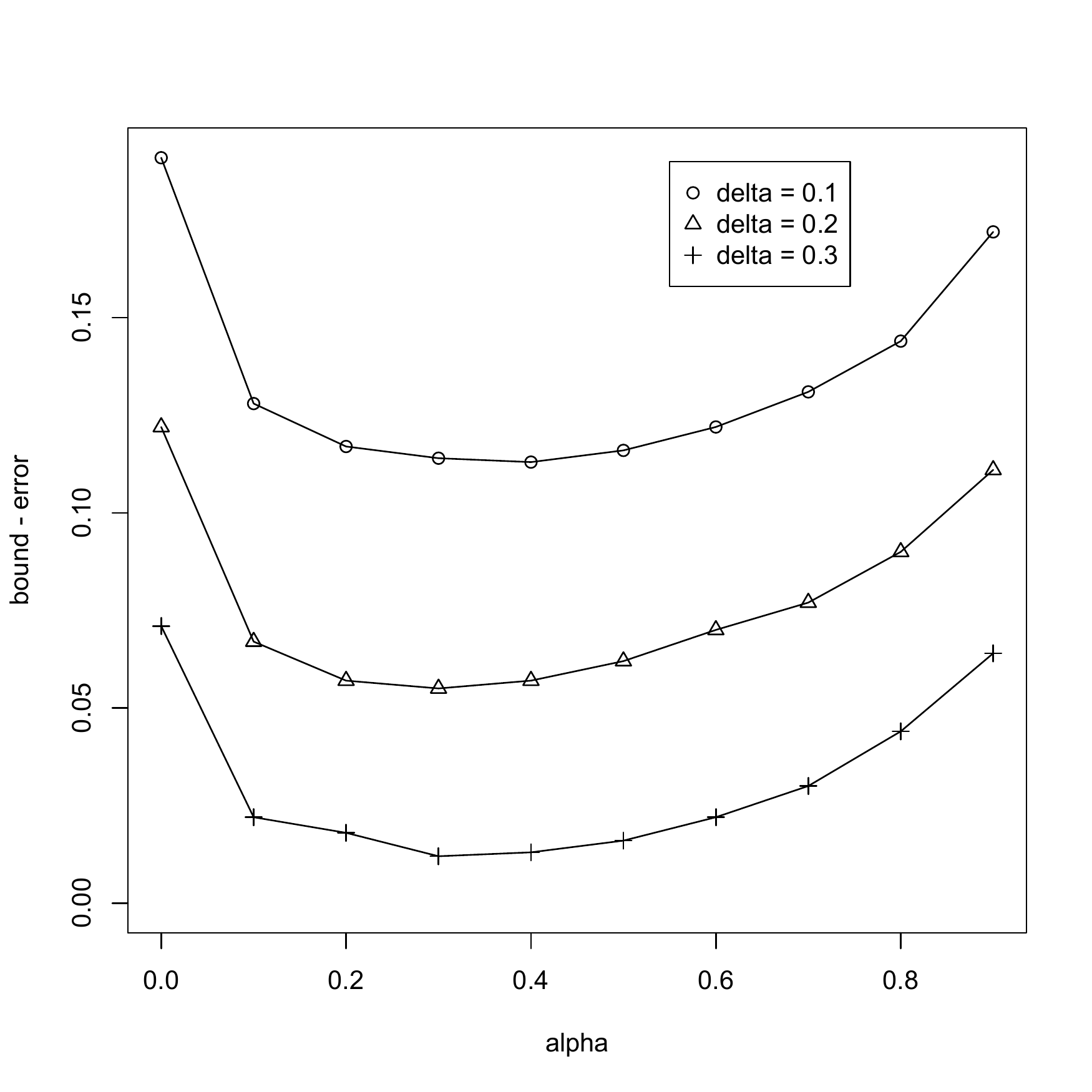}
        }\\ %  ------- End of the first row ----------------------%
        \subfigure[Nonlinear Data]{
            \label{vary_nonlinear_fig}
            \includegraphics[width=0.3\textwidth]{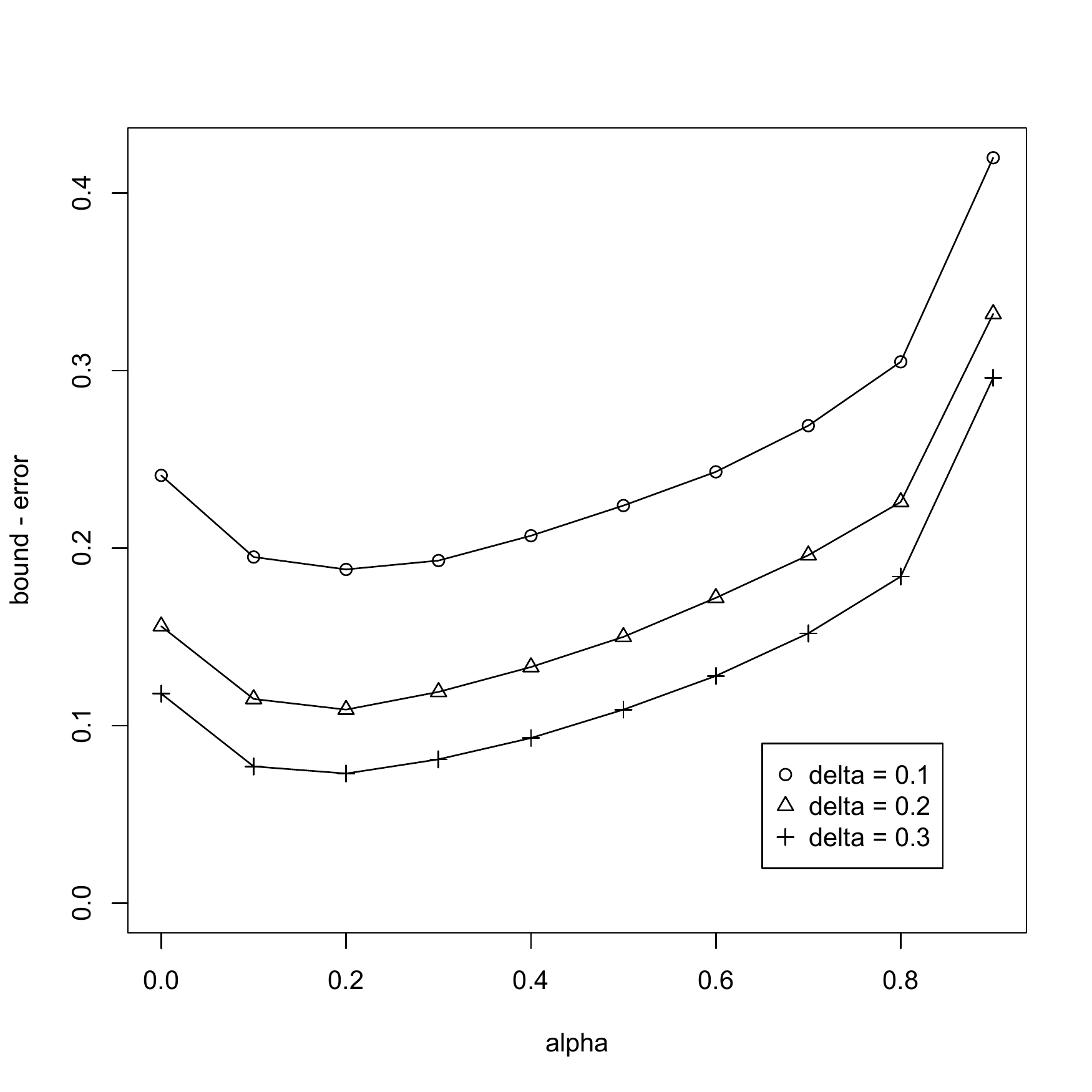}
        }
        \subfigure[Pima Indian Data]{
            \label{vary_pima_fig}
            \includegraphics[width=0.3\textwidth]{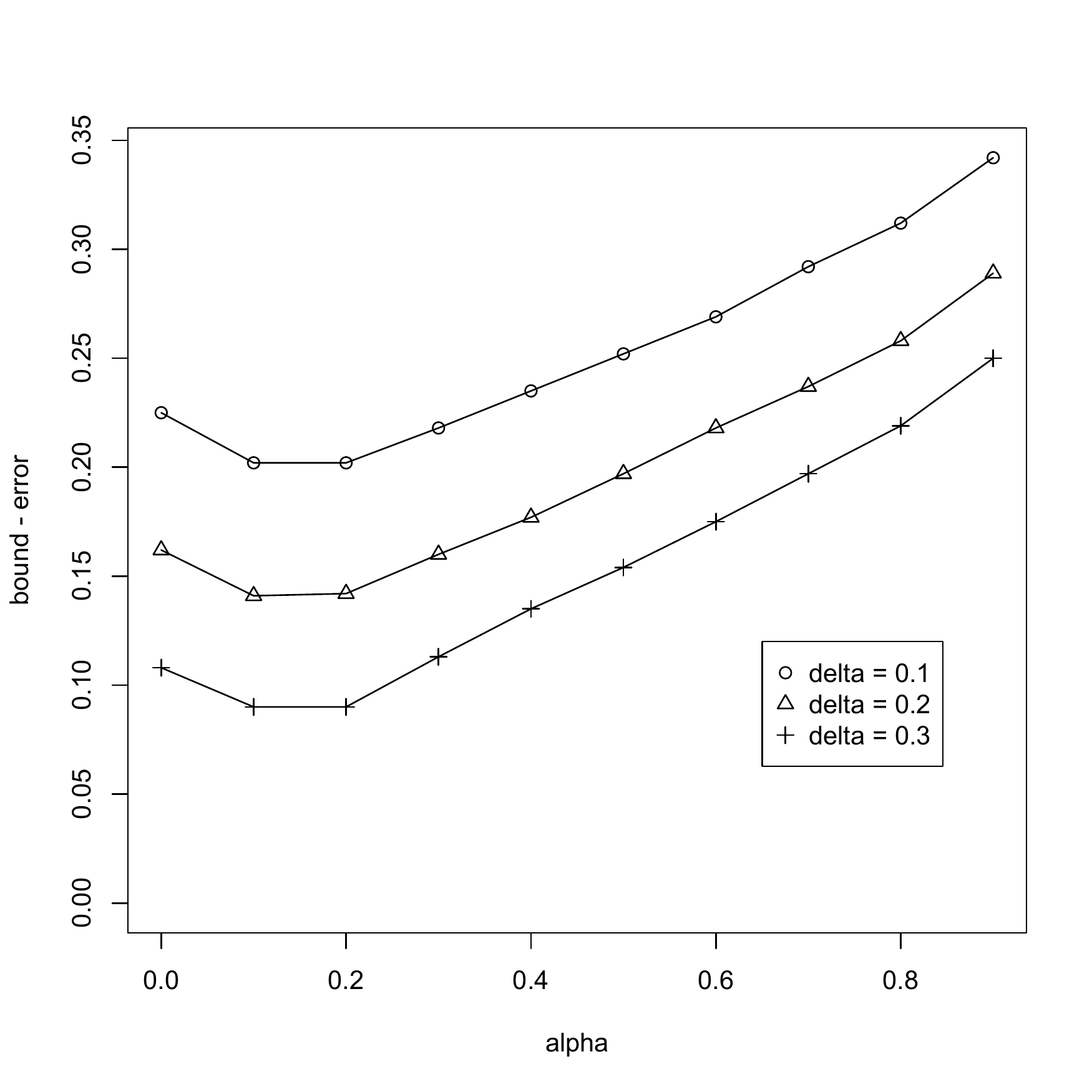}
        }

    \end{center}
    \caption{
    Bound effectiveness as a function of $\alpha$.
     }
   \label{vary_fig}
\end{figure}

\subsection{Iris Data}

%\begin{figure}
%\caption{Iris Data}
%\begin{center}
%\includegraphics[width=40ex]{iris_r.pdf}
%\end{center}
%\label{vary_iris_fig}
%\end{figure}

\begin{table} 
\caption{Iris Data -- Comparing Scoring Functions} 
\label{iris_table} 
\centering 
\begin{tabular}{|p{1.0in}|p{1.0in}|p{1.0in}|p{1.0in}|} \hline 
\multicolumn{1}{|c|}{$\delta$} & 
\multicolumn{1}{|c|}{Error} & 
\multicolumn{1}{|c}{NNSF - Error} & 
\multicolumn{1}{|c|}{ESF - Error} \\ \hline 

\multicolumn{1}{|c|}{0.025} & 
\multicolumn{1}{|c|}{0.000$\pm$0.000} & 
\multicolumn{1}{|c|}{0.249$\pm$0.016} & 
\multicolumn{1}{|c|}{0.408$\pm$0.121} \\ \hline 

\multicolumn{1}{|c|}{0.050} & 
\multicolumn{1}{|c|}{0.000$\pm$0.000} & 
\multicolumn{1}{|c|}{0.235$\pm$0.060} & 
\multicolumn{1}{|c|}{0.296$\pm$0.104} \\ \hline 

\multicolumn{1}{|c|}{0.075} & 
\multicolumn{1}{|c|}{0.000$\pm$0.000} & 
\multicolumn{1}{|c|}{0.154$\pm$0.122} & 
\multicolumn{1}{|c|}{0.247$\pm$0.084} \\ \hline 

\multicolumn{1}{|c|}{0.100} & 
\multicolumn{1}{|c|}{0.000$\pm$0.000} & 
\multicolumn{1}{|c|}{0.017$\pm$0.063} & 
\multicolumn{1}{|c|}{0.213$\pm$0.096} \\ \hline 

\multicolumn{1}{|c|}{0.125} & 
\multicolumn{1}{|c|}{0.000$\pm$0.000} & 
\multicolumn{1}{|c|}{0.000$\pm$0.008} & 
\multicolumn{1}{|c|}{0.182$\pm$0.113} \\ \hline 

\multicolumn{1}{|c|}{0.150} & 
\multicolumn{1}{|c|}{0.000$\pm$0.000} & 
\multicolumn{1}{|c|}{0.000$\pm$0.000} & 
\multicolumn{1}{|c|}{0.142$\pm$0.124} \\ \hline 

\end{tabular} 

\end{table} 

Figure \ref{vary_iris_fig} and Table~\ref{iris_table} show results for a data set involving iris classification. The data set is from the repository of data sets for machine learning maintained by the University of California at Irvine, which is available online. The data set contains examples for three types of iris; we use only the examples for the first two types in order to produce binary classification problems. This leaves 100 examples, with 50 from each class. Each example has four input dimensions. We use \textit{t} = 40 training examples and \textit{w} = 4 working examples for each trial. The classification method is 1-nearest neighbor. The iris data are easy to classify, as indicated by the fact that the errors are always zero. 

For both scoring functions, the bounds use as $Q$ a set of permutations that makes each size 4 subset of the 44 examples the last 4 examples in the sequence exactly once. (This is equivalent, except for random tie-breaking, to using all 44! permutations of the sequence of examples as $Q$.) Bax and Callejas call this a \textit{complete filter} \cite{baxcallejas}.

Figure \ref{vary_iris_fig} shows that, for most values of $\delta$, the bound improves substantially from $\alpha = 0.0$ to $\alpha = 0.1$, which goes from considering only agreement with the nearest neighbor among training examples to evaluate the label of each working example to giving other nearby neighbors a role as well. Then the bounds stay about the same as $\alpha$ increases. 

Table~\ref{iris_table} shows that NNSF with $\alpha = 0.5$ has a statistically significant advantage over ESF over a full range of $\delta$ values. For $\delta=0.1$ and greater, NNSF produces bounds that are at least an order of magnitude tighter than those produced by ESF. For $\delta=0.15$, NNSF returned a bound that is equal to the actual error rate of zero for all 1000 problems. 

NNSF performs so well because the classes in the iris data are well-separated. The nearest several neighbors to an example are usually all from the same class as the example. As a result, NNSF easily rejects as unlikely those assignments that mislabel working examples, because their several closest neighbors among the training examples all disagree with the incorrect label.  

\subsection{Data with a Linear Class Boundary}

%\begin{figure}
%\caption{Data with a Linear Class Boundary}
%\begin{center}
%\includegraphics[width=40ex]{random_r.pdf}
%\end{center}
%\label{vary_linear_fig}
%\end{figure}

\begin{table} 
\caption{Linear Class Boundaries -- Comparing Scoring Functions} 
\label{linear_table} 
\centering 
\begin{tabular}{|p{1.0in}|p{1.0in}|p{1.0in}|p{1.0in}|} \hline 
\multicolumn{1}{|c|}{$\delta$} & 
\multicolumn{1}{|c|}{Error} & 
\multicolumn{1}{|c}{NNSF - Error} & 
\multicolumn{1}{|c|}{ESF - Error} \\ \hline 

\multicolumn{1}{|c|}{0.100} & 
\multicolumn{1}{|c|}{0.067$\pm$0.078} & 
\multicolumn{1}{|c|}{0.114$\pm$0.097} & 
\multicolumn{1}{|c|}{0.194$\pm$0.100} \\ \hline 

\multicolumn{1}{|c|}{0.200} & 
\multicolumn{1}{|c|}{0.073$\pm$0.083} & 
\multicolumn{1}{|c|}{0.051$\pm$0.109} & 
\multicolumn{1}{|c|}{0.118$\pm$0.101} \\ \hline 

\multicolumn{1}{|c|}{0.300} & 
\multicolumn{1}{|c|}{0.068$\pm$0.079} & 
\multicolumn{1}{|c|}{0.010$\pm$0.106} & 
\multicolumn{1}{|c|}{0.066$\pm$0.097} \\ \hline 

\end{tabular} 

\end{table} 

Figure \ref{vary_linear_fig} and Table~\ref{linear_table} show results for randomly generated data. The data consist of 1100 examples drawn uniformly at random from a three-dimensional input cube with length one on each side. The class label is zero if the input is from the left half of the cube and one if the input is from the right half of the cube. For these tests, there are \textit{t} = 100 training examples and \textit{w} = 10 working examples, using 1-nearest neighbor classification. 

For both scoring functions, the bounds use as $Q$ a set of 1000 permutations drawn uniformly at random without replacement from a set of permutations that makes each size 10 subset of the 110 examples the last 10 examples exactly once. Bax and Callejas call this a \textit{sample filter} \cite{baxcallejas}. Sampling is used to reduce computation. 

Figure \ref{vary_linear_fig} shows that, as for the iris data, the largest improvement in the bounds for data with a linear class boundary comes from changing $\alpha$ from $0.0$ to $0.1$, to give multiple neighbors a role in evaluating assignments. From there, increasing $\alpha$ produces stronger bounds until $\alpha = 0.4$ or $\alpha = 0.5$. Then, increasing $\alpha$ more produces weaker bounds, as more distant neighbors that are less likely to be from the same class as the working example being evaluated play a stronger role in the scoring function.

Table~\ref{linear_table} shows that NNSF with $\alpha = 0.5$ produces statistically significantly better bounds than ESF for this data. The ratio of the difference between bound and error for NNSF to that of ESF increases as bound certainty parameter $\delta$ increases. NNSF performs well for this data because, as for the iris data, the nearest neighbors to each working example among the training examples are usually from the same class as the working example. As a result, NNSF rejects as unlikely most assignments that mislabel working examples.

\subsection{Data with a Nonlinear Class Boundary}

%\begin{figure}
%\caption{Data with a Nonlinear Class Boundary}
%\begin{center}
%\includegraphics[width=40ex]{nonlinear_r.pdf}
%\end{center}
%\label{vary_nonlinear_fig}
%\end{figure}

\begin{table} 
\caption{Nonlinear Class Boundaries -- Comparing Scoring Functions} 
\label{nonlinear_table} 
\centering 
\begin{tabular}{|p{1.0in}|p{1.0in}|p{1.0in}|p{1.0in}|} \hline 
\multicolumn{1}{|c|}{$\delta$} & 
\multicolumn{1}{|c|}{Error} & 
\multicolumn{1}{|c}{NNSF - Error} & 
\multicolumn{1}{|c|}{ESF - Error} \\ \hline 

\multicolumn{1}{|c|}{0.100} & 
\multicolumn{1}{|c|}{0.186$\pm$0.129} & 
\multicolumn{1}{|c|}{0.242$\pm$0.141} & 
\multicolumn{1}{|c|}{0.245$\pm$0.145} \\ \hline 

\multicolumn{1}{|c|}{0.200} & 
\multicolumn{1}{|c|}{0.180$\pm$0.125} & 
\multicolumn{1}{|c|}{0.155$\pm$0.151} & 
\multicolumn{1}{|c|}{0.160$\pm$0.143} \\ \hline 

\multicolumn{1}{|c|}{0.300} & 
\multicolumn{1}{|c|}{0.177$\pm$0.122} & 
\multicolumn{1}{|c|}{0.105$\pm$0.157} & 
\multicolumn{1}{|c|}{0.116$\pm$0.138} \\ \hline 

\end{tabular} 

\end{table}

Figure \ref{vary_nonlinear_fig} and Table~\ref{nonlinear_table} show results for randomly generated data with a nonlinear class boundary. The data have the same characteristics as in the previous test, except that each class label is determined by the XOR of whether the input is in the left half of the cube, the bottom half of the cube, and the front half of the cube. In other words, the cube is cut into eight sub-cubes, and each sub-cube has a different class than the three sub-cubes with which it shares a side. This class scheme introduces more error than for the data with a linear class boundary. Similar to the data with linear class boundaries, 1-nearest neighbor classification is used, and the bound method uses a random sample of 1000 permutations as $Q$.

Figure \ref{vary_nonlinear_fig} shows results similar to those for data with a linear class boundary, except that increasing $\alpha$ beyond $0.2$ produces weaker bounds. This occurs for lower $\alpha$ with this data set, because working examples in this data set are more likely to have some near neighbors from a different class than working examples in the data set with a linear class boundary.

For the data with nonlinear class boundaries, Table~\ref{nonlinear_table} shows that NNSF with $\alpha = 0.5$ and ESF perform similarly well. For $\delta = 0.1$ and $0.2$, the differences in bounds produced by NNSF and by ESF are not statistically significant. For $\delta = 0.3$, the difference is statistically significant, but small. For this data, working examples are likely to have near neighbors among the training examples that are from a different class. As a result, NNSF does not have a strong advantage over ESF. 

\subsection{Pima Indian Diabetes Data}

%\begin{figure}
%\caption{Pima Indian Data}
%\begin{center}
%\includegraphics[width=40ex]{pima_r.pdf}
%\end{center}
%\label{vary_pima_fig}
%\end{figure}

\begin{table} 
\caption{Pima Indian Diabetes Data -- Comparing Scoring Functions} 
\label{pima_table} 

\centering 
\begin{tabular}{|p{1.0in}|p{1.0in}|p{1.0in}|p{1.0in}|} \hline 
\multicolumn{1}{|c|}{$\delta$} & 
\multicolumn{1}{|c|}{Error} & 
\multicolumn{1}{|c}{NNSF - Error} & 
\multicolumn{1}{|c|}{ESF - Error} \\ \hline 

\multicolumn{1}{|c|}{0.100} & 
\multicolumn{1}{|c|}{0.322$\pm$0.136} & 
\multicolumn{1}{|c|}{0.252$\pm$0.139} & 
\multicolumn{1}{|c|}{0.225$\pm$0.147} \\ \hline 

\multicolumn{1}{|c|}{0.200} & 
\multicolumn{1}{|c|}{0.309$\pm$0.135} & 
\multicolumn{1}{|c|}{0.197$\pm$0.140} & 
\multicolumn{1}{|c|}{0.162$\pm$0.146} \\ \hline 

\multicolumn{1}{|c|}{0.300} & 
\multicolumn{1}{|c|}{0.314$\pm$0.139} & 
\multicolumn{1}{|c|}{0.154$\pm$0.150} & 
\multicolumn{1}{|c|}{0.108$\pm$0.151} \\ \hline 

\end{tabular} 
\end{table}

Figure \ref{vary_pima_fig} and Table~\ref{pima_table} show results for data related to diabetes among Pima Indians. The data set is available from the online repository maintained by the University of California at Irvine. The data set has 768 examples: 500 from one class and 268 from another. Each example has eight input dimensions. Since the input dimensions have different scales, we normalize the data, translating and scaling each input dimension to give it  mean zero and standard deviation one. We use \textit{t} = 200 training examples and \textit{w} = 12 working examples for each trial, with $Q$ a random sample of 100 permutations. The tests use 1-nearest neighbor classification. 

The results in Figure \ref{vary_pima_fig} are similar to those for random data with a nonlinear class boundary. Like that data, working examples in the Pima Indian data are likely to have near neighbors from a different class. Still, as for the nonlinear class boundary data, there is some improvement from using small positive values of $\alpha$ rather than $\alpha=0.0$. These data are difficult to classify, as shown by the high error rates in Table~\ref{pima_table}. For this data, ESF outperforms NNSF with $\alpha = 0.5$. However, smaller values of $\alpha$ make NNSF an improvement over ESF even for this data set, as shown in Figure~\ref{vary_pima_fig}.

%\subsection{Selecting $\alpha$}
%The tests show that the choice of $\alpha$ affects the error bound. It is not valid to simply try different values for $\alpha$ and output the best resulting error bound. One way to optimize over different $\alpha$ values is to use uniform bounds over the different $\alpha$ values, multiplying $\delta$ by the number of $\alpha$ values tested. Another way is to try $i$ different $\alpha$ values and output the $j$th best bound. Using this method, called nearly uniform validation \cite{bax_compression}, we only need to multiply $\delta$ by $\frac{i}{j}$. Alternatively, we may return the average bound over different $\alpha$ values without increasing $\delta$ (see \cite{bax98} for details).

 \section{Discussion}
This paper developed a new scoring function for worst likely assignment error bounds. For each of our example data sets, the scoring function improved error bounds for some setting of the parameter $\alpha$, and it improved the error bounds for a blind choice of $\alpha$ for the data sets that produced accurate classifiers. For the accurate classifiers, the method produced effective error bounds even with 100 or fewer training examples. 
 
 One challenge for the future is to develop faster methods to compute worst likely assignment error bounds. We can speed up the permutation test for each assignment by sampling permutations, as we did in the tests. It may be possible to use fewer permutations by a cleverer method of sampling (we used uniform sampling) or by fitting the sampled results to a distribution, as in \cite{knijnenburg09}. 
 
 The greater challenge is to avoid explicitly running a permutation test for each assignment, since the number of assignments is exponential in the number of working examples. For 1-nearest neighbor classifiers, there is a dynamic programming method that computes a worst likely assignment error bound without explicit computation for each assignment \cite{baxcallejas}. The method produces an error bound in time polynomial in the number of in-sample examples. It would be very useful to extend that method to other types of classifiers, and selecting the right scoring functions may be the key. (An alternative method to achieve polynomial-time computation is to partition the working set, produce an error bound for each partition, and use a union bound over the partitions \cite{li12}. But this approach produces looser bounds than validation over the whole working set at once.)
 
 Another challenge is to improve worst likely assignment error bounds for network classifiers \cite{sen2008,getoor07,kolacyzk10,london12}. Since many networks grow by accretion, for example friends inviting friends to join social networks, the nodes are not necessarily drawn i.i.d. This makes developing permutation tests for the nodes challenging. However, some subsets of nodes may be drawn i.i.d. \cite{bax13}, allowing permutation tests within those subsets. There is some work using permutation tests for hypothesis testing in social networks \cite{belo13,belo12,lafond10,anagnostopoulos08} and some work on applying worst likely assignments to produce error bounds for them \cite{li12}. The challenge for the future is to develop scoring functions specifically designed for network classifiers in order to improve those bounds. 
 
 Finally, it would be interesting to explore the concept of best error bounds. Just as there are established criteria for best estimators in statistics (see e.g. \cite{hoel54} pp. 200--202), there should be reasonable criteria for best error bounds. These criteria may drive the discovery and development of new scoring functions.

\newpage

\bibliographystyle{unsrt}
\bibliography{bax}

\end{document}